	\tikzset{
	pil/.style={
		->,
		thick,
		shorten <=2pt,
		shorten >=2pt,}
}
\theoremstyle{remark}
\newenvironment{remark}
{\pushQED{\qed}\remarkx}
{\popQED\endremarkx}
\theoremstyle{definition}
\newtheorem{defn}{Definition}
\newtheorem{assump}{Assumption}
\newtheorem{problem}{Problem}
\newtheorem*{problem*}{Problem}
\theoremstyle{plain}
\newtheorem{theorem}{Theorem}
\newtheorem{lemma}{Lemma}
\newtheorem{prop}{Proposition}
\newcommand{\red}[1]{{\leavevmode\color{red}#1}}
\newcommand{\green}[1]{{\leavevmode\color{green}#1}}
\newcommand{\cmmnt}[1]{}
\newcommand{\scalemath}[2]{\scalebox{#1}{\mbox{\ensuremath{\displaystyle #2}}}}
\title{\LARGE \bf Behavioral-based circular formation control for robot swarms}
\author{Jesús Bautista, Héctor García de Marina 
	\thanks{J. Bautista and H.G. de Marina are with the Department of Computer Engineering, Automation, and Robotics, University of Granada (UGR-CITIC), Spain. This work is supported by the ERC Starting Grant \emph{iSwarm} 101076091 and the RYC2020-030090-I grant from the Spanish Ministry of Science.}
}
\begin{document}

\maketitle
\thispagestyle{empty}
\pagestyle{empty}

\begin{abstract}
This paper focuses on coordinating a robot swarm orbiting a convex path without collisions among the individuals. The individual robots lack braking capabilities and can only adjust their courses while maintaining their constant but different speeds. Instead of controlling the spatial relations between the robots, our formation control algorithm aims to deploy a dense robot swarm that mimics the behavior of \emph{tornado} schooling fish. To achieve this objective safely, we employ a combination of a scalable overtaking rule, a guiding vector field, and a control barrier function with an adaptive radius to facilitate smooth overtakes. The decision-making process of the robots is distributed, relying only on local information. Practical applications include defensive structures or escorting missions with the added resiliency of a swarm without a centralized command. We provide a rigorous analysis of the proposed strategy and validate its effectiveness through numerical simulations involving a high density of unicycles.
\end{abstract}

%


\section{Introduction} 
\label{sec: intro}

Formation control, one of the most actively studied topics within multi-robot systems, aims to coordinate multiple robots spatially given the practical applications \cite{yang2018grand}. The idea of formation can be understood in different ways. One way is by looking at how robots are positioned in relation to each other, like how far apart they are or the angles between them \cite{oh2015survey}. Another way, called the behavioral approach, does not worry about exact positions but keeps cohesion and ensures robots do not crash into each other \cite{balch1998behavior}. Although the bio-inspired and heuristic strategies fit more into the latter approach \cite{reynolds1987flocks}, there also exist rigorous behavioral-based studies \cite{olfati2006flocking}.

In this paper, we present and analyze rigorously a circular formation algorithm that fits more into the behavioral-based approach, in the sense that we aim at the order that \emph{emerges} from having a large number of robots following the same closed trajectory rather than the classical pursuit and organization in the circle \cite{marshall2006pursuit}. In particular, we put together a high density of unicycles whose mission is to follow the same convex path while traveling at a constant but different speeds. Our proposed strategy to avoid collisions in combination with a guiding vector field to follow the path results into an intricate coordination reminiscent of the emergent behavior observed in a school of fish. Using guiding vector fields that spiral towards a desired circle imitates how fish closely track their neighbors' stream with their lateral line organ for \emph{tornado} schooling \cite{bone2008biology}. Another interesting behavior occurs when we have two groups of robots following the same path but in opposite direction. In this case, robots organize themselves \emph{drawing} a dynamic boundary around the path in order to avoid frontal collisions between the two groups.

Independently of the chosen algorithm to follow the path, one effective strategy to avoid collisions between robots is to employ \emph{Control Barrier Functions} (CBFs) \cite{cbf_theory}, a mathematical tool that guarantees robots to be away from unsafe sets for the relative positions. To the best of our knowledge, the literature demonstrates CBFs for mobile robots that can change their speeds \cite{wang2017safety, borrmann2015control,chen2017obstacle,panagou2015distributed, wang2016multi, chen2020guaranteed}, i.e., braking to avoid an imminent collision. However, demanding drastic changes in the speeds of vehicles such as fixed-wing UAVs compromise their energy efficiency and stability regarding keeping a leveled altitude or even stalling and crashing, but this time into the ground.

Unfortunately, we observe that the common employed \emph{Collision Cone Control Barrier Function} (C3BF) \cite{c3bf_arigraft,c3bf_UAV,c3bf_car,c3bfs_missile} is not enough to guarantee a collision-free environment when unicycles cannot brake. For example, one robot might have to take an impossible decision to avoid two robots, i.e., to turn to the right and to the left simultaneously. To solve this problem, we present a scalable overtaking rule, that in combination with an adaptive-collision radius, and the guiding vector field presented in \cite{kapitanyuk2017guiding}, ensures a collision-free environment for unicycles that follow the same path with constant but different speeds. In particular, the usage of the guiding vector field in \cite{de2017guidance, kapitanyuk2017guiding} facilitates one mild condition so that we can extend the results to convex closed paths. In addition, the decision making process is completely distributed, i.e., there is no centralized computation with global information, but an individual robot can figure out a safe action to avoid a collision on its own with local information and, more importantly, it will not harm others' decisions concerning their safety as it is crucial to scale up robot swarms \cite{wang2017safety, borrmann2015control}.

We organize the paper as follows. In Section \ref{sec: pre} we introduce the necessary preliminaries on the employed guiding vector field and control barrier functions to formulate our problem and objective formally. Next, in Section \ref{sec: collision}, we introduce and analyze our collision avoidance strategy. In Section \ref{sec: sim}, we validate our results on the behavioral-based circular formation control through simulations involving dense populations of robots. Finally, Section \ref{sec: con} ends with some conclusions.

\section{Preliminaries and problem formulation} \label{sec: pre}

\subsection{Notation and the unicycle model with constant speed}
Let us define $\hat x := \frac{x}{||x||}$, the rotational matrixes $E := R(-\pi/2) = \left [ \begin{smallmatrix}
      0 & 1 \\
      -1 &  0
  \end{smallmatrix} \right ]$, and a continuous function $\kappa : (-c,d) \rightarrow (-\infty,\infty)$ that belong to (extended) \textit{class} $\mathcal{K}$, i.e., it is strictly increasing and $\kappa(0) = 0$. We consider a robot swarm of $N  \in \mathbb{N} \geq 2$ unicycles labeled by $i \in \mathcal{N} := \{1,2, ... ,N\}$, where $\mathcal{N}$ is the set of robots. The dynamic state of each robot is described by $q_i \in \mathcal{D}_i \subset \mathbb{R}^2\times \mathbb{S}$, where $\mathbb{S}:=(-\pi,\pi]$, and $\mathcal{D}_i$ is the set of \textit{reachable states} of robot $i$. When the context is clear, for the sake of clarity, we drop the subindex $i$ to refer just to a single robot. The dynamics of the unicycle are expressed as 

\begin{equation} \label{eq: kinematics}
    \dot q_i =
    \left [
      \begin{array}{c}
         \dot p_i^X    \\
         \dot p_i^Y \\
         \dot \theta_i
      \end{array}
    \right ] = 
    s_i
    \left [
      \begin{array}{cc}
         \cos \theta_i\\
         \sin \theta_i\\
         0
      \end{array}
    \right ]
    +
    \left [
      \begin{array}{cc}
         0\\
         0\\
         1
      \end{array}
    \right ]
    \omega_i,
\end{equation}
where, $[p_i^X\; p_i^Y]^T =: p_i \in \mathbb{R}^2$ is the absolute position of the robot $i$, $s_i \in \mathbb{R}^+$ its constant linear speed and $w_i \in \mathbb{R}$ its angular speed with $\theta_i \in \mathbb{S}$ being the heading angle.

Given a pair of robots $(i,j)$, the relative position vector centered on $i$ is defined as $p_{ij} := p_j - p_i = \left [ \begin{smallmatrix}
      p_j^X - p_i^X \\
      p_j^Y - p_i^Y
  \end{smallmatrix} \right ]$ and the relative angle from $i$ as $\theta_{ij} := \theta_j - \theta_i$. Then, considering \eqref{eq: kinematics}, the first time derivative of $p_{ij}$ yields to the relative velocity, i.e., $v_{ij} := \dot p_{ij} = v_j - v_i$, so the dynamical model of a relative position with absolute heading coordinates is given by

\begin{align} \label{eq: rel_kinematics}
    \scalemath{0.78}{
    \dot q_{ij} =
    \left [
      \begin{array}{c}
         \dot p_{ij}^X \\
         \dot p_{ij}^Y \\
         \dot \theta_{i}\\
         \dot \theta_{j}
      \end{array}
    \right ] = 
    \underbrace{
    \left [
    \begin{array}{cc}
        s_j\cos\theta_j - s_i\cos\theta_i\\
        s_j\sin\theta_j - s_i\sin\theta_i\\
        0\\
        0
    \end{array}
    \right ]
    }_{f(q_{ij})}
    +
    \underbrace{
    \left [
      \begin{array}{cc}
         0 & 0\\
         0 & 0\\
         1 & 0\\
         0 & 1
      \end{array}
    \right ]
    }_{g(q_{ij})}
    \left [
      \begin{array}{cc}
         \omega_i\\
         \omega_j\\
      \end{array}
    \right ]
    },
\end{align}
where $f:\mathbb{R}^2\times\mathbb{T}^2 \rightarrow \mathbb{R}^4$ and $g:\mathbb{R}^{2}\times\mathbb{T}^2 \rightarrow \mathbb{R}^{4 \times 2}$ are locally Lipschitz functions, with $\mathbb{T}^2 := \mathbb{S}\times\mathbb{S}$, and $\omega_i$ and $\omega_j$ are the control laws $u_{i}$ and $u_{j}$, respectively. Both control laws must be in $\mathcal{U} \subset \mathbb{R}$ due to inherent physical limitations.


\subsection{Guiding vector fields}

The authors in \cite{kapitanyuk2017guiding} present a guiding vector field for the path following of smooth 2D curves. In particular, it is a vector field that guides only in direction and not in speed, making it suitable for vehicles such as fixed-wing aircraft \cite{de2017guidance} whose ground speed depends on the wind. The path to be followed $\mathcal{P} \subset \mathbb{R}^2$ is defined as a \textit{one-dimensional connected submanifold} corresponding to the zero-level set of an implicit function, or \emph{desired path}, $\mathcal{P} := \{ p : \varphi(p) = 0\}$, where $\varphi: \mathbb{R}^2 \rightarrow \mathbb{R}$ is twice continuously differentiable and regular in a neighborhood of $\mathcal{P}$. This representation allows to cover the plane $\mathbb{R}^2$ by $\varphi(p) = c \in\mathbb{R}$, so one can employ these level sets as a metric for the error between the robot and the desired path $\mathcal{P}$, i.e., $e(p) := \varphi(p) \in \mathbb{R}$. Considering this metric, it is shown in \cite{kapitanyuk2017guiding} that the vector field 
\begin{equation}
\dot p_d(p) := \tau(p) - k_e e(p) n(p),
\label{eq: gvf}
\end{equation}
guides the robots to converge and travel over $\mathcal{P}$. The gain $k_e \in \mathbb{R}^+$ tunes the convergence aggressiveness towards $\mathcal{P}$, where $n(p) := \nabla\varphi(p)$ and $\tau(p) := En(p)$ are the normal and tangent vectors, respectively, to the curve corresponding to the level set $\varphi(p)$. It is shown in \cite{de2017guidance} how to design a heading controller $u_{\text{ref}}$ for a fixed-wing aircraft with constant speed such that it aligns its velocity $\dot p(t)$ to the guiding vector field $\dot p_d(p)$, where a gain $k_d\in\mathbb{R}^+$ tunes the aggressiveness of such a convergence. For a closed path $\mathcal{P}$, we note that a small $k_e$ will make the unicycle to orbit almost in parallel to $\mathcal{P}$, e.g., a spiral towards the desired circular path.

\subsection{Control Barrier Functions}

The CBF is a mathematical tool that prevents the state of a system from entering an unsafe set. Let us define the following closed set $\mathcal{C}$ (\textit{safe set}):
\begin{equation}
\begin{rcases}
    \mathcal{C} &:= \{x \in \mathcal{D} \subset \mathbb{R}^n \; : \; h(x) \geq 0\} \\
    \partial\mathcal{C} &:= \{x \in \mathcal{D} \subset \mathbb{R}^n \; : \; h(x) = 0\} \\
    \text{Int}(\mathcal{C}) &:= \{x \in \mathcal{D} \subset \mathbb{R}^n \; : \; h(x) > 0\} \end{rcases},  \label{eq: safe_set}
\end{equation}
where $x\in\mathbb{R}^n$ is the system state, and $h: \mathcal{D} \rightarrow \mathbb{R}$ is a continuously differentiable function.

\begin{defn} [Safety \cite{cbf_theory}] We say that a given system is \emph{safe} if $\mathcal{C}$ is forward invariant, i.e., $\forall x(0) \in \mathcal{C}$, $x(t) \in \mathcal{C} \; \forall t \geq 0$. 
\end{defn}

\begin{defn} [Valid CBF \cite{cbf_theory}] 
\label{def: validcbf}
    Given a safe closed set $\mathcal{C}$ as in \eqref{eq: safe_set}, the function $h: \mathcal{D} \rightarrow \mathbb{R}$ with $\frac{\partial h}{\partial x}(x) \neq 0, \forall x \in \partial\mathcal{C}$, is a \emph{valid CBF} if there exists an (extended) class $\mathcal{K}$ function $\kappa$ such that $\forall x \in \mathcal{C}$ the following \textit{safety condition} is satisfied:
    \begin{equation} \label{eq: CBF_cond}
    \underbrace{\text{sup}}_{u\in\mathcal{U}}\left[\underbrace{\mathcal{L}_f h(x) + \mathcal{L}_g h(x)u}_{\dot h(x)} + \kappa(h(x))\right] \geq 0, 
    \end{equation}
    where $\mathcal{L}_f h(x) := \frac{\partial h}{\partial x}f(x)$ and $\mathcal{L}_g h(x) := \frac{\partial h}{\partial x}g(x)$.
\end{defn}
For the sake of conciseness throughout the paper, let us define the inner expression of (\ref{eq: CBF_cond}) only evaluated with $u_{\text{ref}}$ as
\begin{equation}
 \label{eq: psi}
    \Psi(x) := \dot h(x,u_\text{ref}(x)) + \kappa(h(x)).
\end{equation} 
Later, we will suggest a set $\mathcal{C}$ linked to the robots' relative states that does not imply a risk of collision. Hence, our problem, to be formalized later, is to find a control law that guarantees the safety, or no collisions, of a system consisting of an arbitrary number of robots that follow a common path $\mathcal{P}$ and with relative dynamics \eqref{eq: rel_kinematics}. To do so, we will use a valid CBF. Now, let us consider the nominal input $u_\text{ref}$, i.e., the angular velocity that aligns robot's $v_i$ with \eqref{eq: gvf} \cite{de2017guidance}. We can modify $u_\text{ref}$ in a minimal way so that the inner expression of \eqref{eq: CBF_cond} is positive via the \emph{Quadratic Programming} (QP) problem
\begin{equation} \label{eq: CBF-QP}
    \begin{array}{rl}
        u^*(x)  &= \underset{u \, \in \, \mathcal{U}}{\text{min}} \; \|u - u_\text{ref}(x)\|^2 \\
        \text{s.t.} & \quad \mathcal{L}_f h(x) + {\mathcal{L}_g h(x)}^T u + \kappa(h(x)) \geq 0,
    \end{array}
\end{equation}
as in \cite{cbf_qp_control}. The function $\kappa$ modulates the \textit{aggressiveness} to prevent the system state from getting out of the safe set $\mathcal{C}$, with $\kappa(h(x)) = 0$ representing the most aggressive scenario, i.e., the most significant modification of the nominal $u_\text{ref}$.

We will exploit later the following particular case. If there is only one condition, the QP problem (\refeq{eq: CBF-QP}) could be feasible with the solution $u^*(x) = u_\text{ref}(x) + u_\text{safe}(x)$ with
\begin{equation} \label{eq: u_safe}
	u_\text{safe}(x) = 
    \begin{cases}
        \begin{array}{lr}
            0 
            & \text{if} \; \Psi(x) \geq 0\\
            - \, \frac{\mathcal{L}_g h(x)^T\Psi(x)}{\mathcal{L}_g h(x)^T\mathcal{L}_g h(x)} 
            & \text{if} \; \Psi(x) < 0
        \end{array}
    \end{cases}\hspace{-0.5cm}.
\end{equation}
Note that when $\|\mathcal{L}_g h(x)\| \rightarrow 0$ we approach to a singularity where $\|u_\text{safe}(x)\| \rightarrow \infty$. 

We say that the pair robots $(i,j)$ do not \textit{collaborate} when robot $j$ is seen as an \emph{obstacle} so that $\omega_j$ is fixed and robot $i$ has to do \emph{all the work} modifying $\omega_i$; therefore, the QP problem is reformulated as
\begin{align}
        u_i^*(x)  &= \underset{u_i \, \in \, \mathcal{U}_i}{\text{min}} \; \|u_i - u_\text{ref}^i(x)\|^2 \label{eq: qp_i} \\
        \text{s.t.} \, & \mathcal{L}_f h(x) + \mathcal{L}_g h^i(x)u_i + \mathcal{L}_g h^j(x)\omega_j + \kappa(h(x)) \geq 0, \nonumber
\end{align}
where $\mathcal{L}_gh^i : \mathbb{R}^n \rightarrow \mathbb{R}$ and $\mathcal{L}_gh^j : \mathbb{R}^n \rightarrow \mathbb{R}$ are the two components of $\mathcal{L}_g h(x) = \begin{bmatrix}\mathcal{L}_g h^i(x) & \mathcal{L}_g h^j(x)\end{bmatrix}^T$.

\subsection{Problem formulation}

\begin{defn} [Collision between two robots]
    A robot $i$ collides with another robot $j$ if $p_{ij} \notin \mathcal{X}_{ij} := \{p_{ij} : \|p_{ij}\| > r\}$, where we call $\mathcal{X}_{ij}$ the \textit{collision-free} set and $r \in \mathbb{R}^+$. 
\end{defn}

\begin{problem} [Collision avoidance for a robot swarm of unicycles with constant but different speeds following the same path $\mathcal{P}$] \label{problem}
    Suppose a set $\mathcal{N}$ of $N$ unicycles modeled by \eqref{eq: kinematics} where all robots must follow the same convex path $\mathcal{P}$. Then, given $r\in \mathbb{R}^+$ as a collision radius, and $u_\text{ref}^i$ as the control law that aligns robot $i$ with (\ref{eq: gvf}) to follow the path $\mathcal{P}$, find, if any, a supplement $u^i_\text{safe}$ to add to $u_\text{ref}^i$ such that:
    \begin{enumerate}
        \item (Safety) If $p_{ij}(0) \in \mathcal{X}_{ij}$, then $p_{ij}(t) \in \mathcal{X}_{ij}$, $\forall i,j \in\mathcal{N}, i\neq j$, and $\forall t \in \mathbb{R}^+$.
        \item (Path following) There exists a constant $\epsilon > 0$ such that the error of robot $i$ concerning the path $\mathcal{P}$ is bounded, i.e., $|e_i(t)| < \epsilon, \forall t > T$, for some $T \in \mathbb{R}^+$.
    	\item (Feasibility) It is distributed and feasible for all robots, i.e., $u^i = (u_\text{ref}^i+u^i_\text{safe})\in \mathcal{U}$, and it can be calculated by robot $i$ with only local information. 
    \end{enumerate}
\end{problem}


\section{Collision avoidance methodology}
\label{sec: collision}

\begin{figure}
\centering
\includegraphics[trim={0cm 0cm 0 0cm}, clip, width=0.8\columnwidth]{./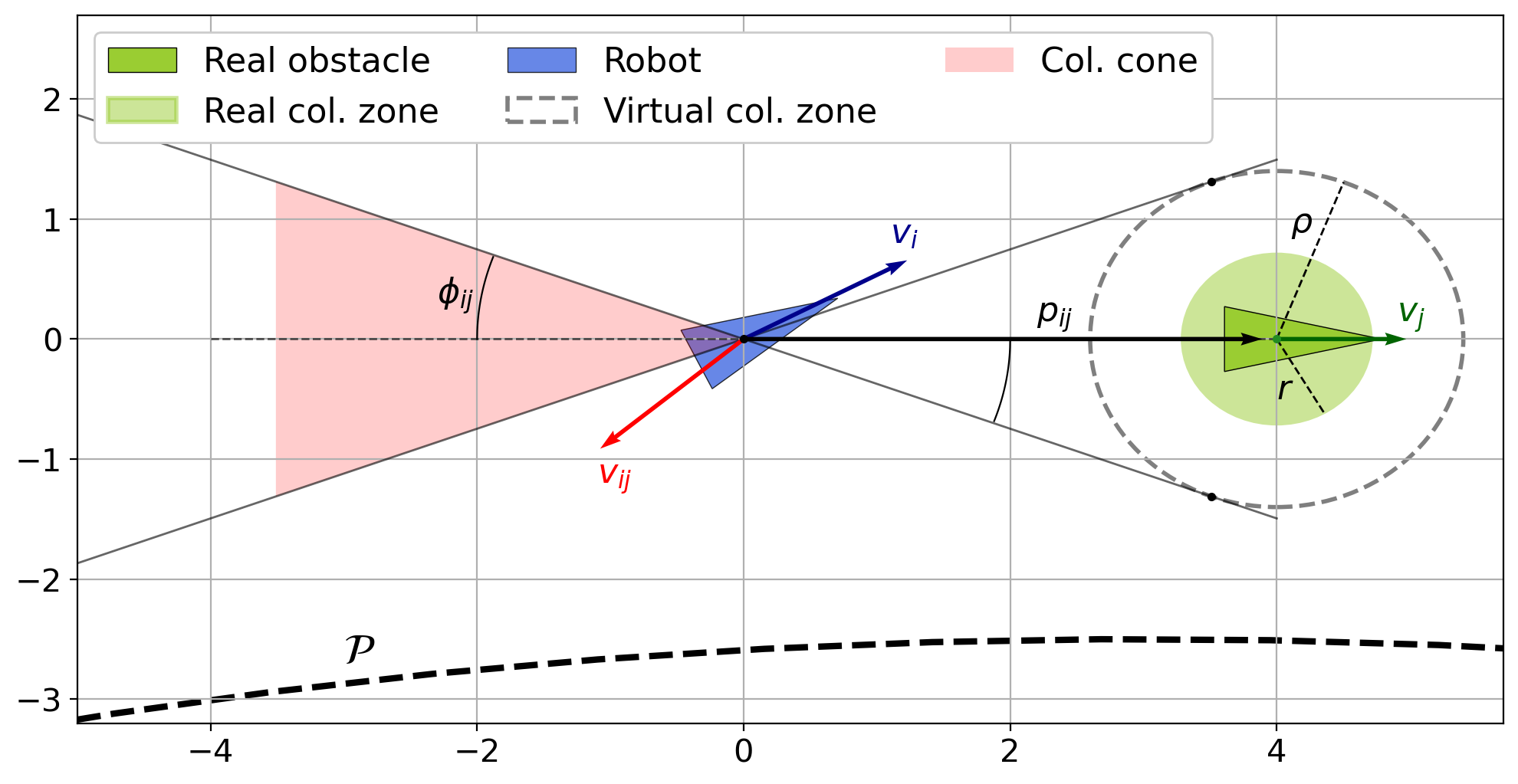}
\caption{For two non-collaborative robots, denoted as robot $i$ (blue) and robot $j$ (green), the overtaking rule in Definition \ref{def: over} dictates that the robot $i$ must keep $v_{ij}$ outside the collision cone generated by the virtual radius $\rho$ and $p_{ij}$. Here, it is important to clarify that $r\in\mathbb{R}$ represents the actual collision radius and $\rho$ assists with a smoother overtaking.
}
\label{fig: cbf_cone}
\end{figure}

For any pair of robots $(i,j)$, where robot $i$ overtakes robot $j$, and the rule to assign the roles $i$ and $j$ will be introduce shortly, we will work with the \textit{Collision Cone Control Barrier Function} (C3BF)  
\begin{equation} \label{eq: c3bf}
    h(q_{ij}(t)) = {p_{ij}}^T v_{ij} + ||p_{ij}|| \, ||v_{ij}|| \cos\phi_{ij},
\end{equation}
where $\phi_{ij} \in [0,\pi]$, given by $\scalemath{0.8}{\cos\phi_{ij} = \frac{\sqrt{\|p_{ij}\|^2 - \rho(\|p_{ij}\|)^2}}{||p_{ij}||}}$, denotes the half angle of the collision cone of the robot $i$ for the obstacle $j$, as in Figure \ref{fig: cbf_cone}. The function $\rho(\|p_{ij}\|):\mathbb{R}^+ \rightarrow \mathbb{R}^+$ is
increasing and continuous
with $\rho(r) = r$ and $\rho(\|p_{ij}\|) \leq \|p_{ij}\|$.
Thus, $\rho$ can create a large virtual collision zone while the vehicles are far away from each other, and a similar area to the real collision radius $r$ as they get closer. Although counter intuitive, this strategy modulates by design the \textit{overtaking aggressiveness}, e.g., the robot $i$ starts a smooth overtaking from enough distance rather than changing its direction aggressively once robot $j$ is \emph{very close}, i.e., it enables to minimize $|u_{safe}^{i}|$, which assists with the feasibility point in Problem \ref{problem}. 
Nonetheless, if for a given $q_{ij}$ we want $h(q_{ij})>0$, then the designed $\rho$ has to fulfill
\begin{equation} \label{eq: rho_cond}
    \sqrt{1 - \left(\frac{\rho(\|p_{ij}\|)}{\|p_{ij}\|}\right)^2} > - \hat p_{ij}^\top \hat v_{ij}.
\end{equation}
Once we present all our technical results, we will provide an example of how to design $\rho$ accordingly.

The C3BF \eqref{eq: c3bf} inherently operates on robot pairs; consequently, we denote its associated safe set as $\mathcal{C}_{ij}$. Note that if $\mathcal{C}_{ij}$ and $\mathcal{C}_{ji}$ are forward invariant, then $\mathcal{X}_{ij}$ and $\mathcal{X}_{ji}$ are also forward invariant if $\rho(\|p_{ij}\|)$ in (\ref{eq: c3bf}) is designed accordingly, i.e., collisions happen only outside the safe set. However, the C3BF cannot be used straightforwardly; inevitably, there are situations where \eqref{eq: c3bf} is not a valid candidate as a Control Barrier Function (CBF) for a swarm of unicycle robots with non-brake capabilities. For example, consider three robots where one has to avoid the other two simultaneously but the safety conditions demand $u<0$ ($\omega<0$) and $u>0$ ($\omega>0$). The existence of such non-corner cases requests from us to look for an \emph{overtaking rule} that in combination with tracking the direction of the guiding vector field (\ref{eq: gvf}) makes the robot swarm free of conflict cases in order to scale it up seamlessly. 

Before arriving at the formal overtaking rule, we need to introduce some technicalities. Let us write the safety condition \eqref{eq: psi} explicitly so that we can clearly identify its different terms. Thus, we need the calculation of the derivative of \eqref{eq: c3bf}

\begin{align} 
         \scalemath{0.8}{\dot h(q_{ij})} & 
         \scalemath{0.8}{= 
         {\dot p_{ij}}^T v_{ij} + {p_{ij}}^T \dot v_{ij} + ||v_{ij}|| \frac{ {p_{ij}}^T v_{ij} - \rho\dot\rho}
         {\sqrt{\|p_{ij}\|^2 - \rho^2}}
         + \; {v_{ij}}^T \dot v_{ij} \frac{\sqrt{\|p_{ij}\|^2 - \rho^2}}{||v_{ij}||}
         } \nonumber \\
         & \scalemath{0.8}{= 
         \underbrace{{v_{ij}}^T v_{ij} + ||v_{ij}|| 
         \frac{{p_{ij}}^T v_{ij} - \rho\dot\rho}
         {\sqrt{\|p_{ij}\|^2 - \rho^2}}}_{\mathcal{L}_f h(q_{ij})} 
         + \underbrace{{p_{ij}}^T \dot v_j + {v_{ij}}^T \dot v_{j} \frac{\sqrt{\|p_{ij}\|^2 - \rho^2}}{||v_{ij}||}}_{\mathcal{L}_g h^j(q_{ij}) u_j}} \nonumber \\
         & \scalemath{0.8}{\;\; 
         \underbrace{
         - {p_{ij}}^T \dot v_i - {v_{ij}}^T \dot v_{i} \frac{\sqrt{\|p_{ij}\|^2 - \rho^2}}{||v_{ij}||}}_{\mathcal{L}_g h^i(q_{ij}) u_i}}, \label{eq: h_dot}
\end{align}
where we can identify $L_g h^i(q_{ij}) u_i$ and $L_g h^j(q_{ij}) u_j$ by considering those terms involving $\dot v_i$ and $\dot v_j$ since they include the control laws $u_i = \omega_i$ and $u_j = \omega_j$. In particular, after some straightforward calculations, we can arrive at 
\begin{align}
        \mathcal{L}_g h^j(q_{ij}) &= \|p_{ij}\| \left( \hat p_{ij} + \hat v_{ij} \cos\phi_{ij} \right)^T (-Ev_j),
\label{eq: lghj} \\
        \mathcal{L}_g h^i(q_{ij}) &= \|p_{ij}\| \left( \hat p_{ij} + \hat v_{ij} \cos\phi_{ij} \right)^T (Ev_i).
\label{eq: lghi}
\end{align}

It is explicit now that the inner expression in the safety condition \eqref{eq: CBF_cond} consists of four added terms: three given by \eqref{eq: h_dot}, with $ \mathcal{L}_g h^j(q_{ij})$ and $\mathcal{L}_g h^i(q_{ij})$ as in \eqref{eq: lghj} and \eqref{eq: lghi} respectively, and one given by $\kappa(h(x))$ with $h(x)$ as in \eqref{eq: c3bf}. In a non-collaborative context where robot $i$ overtakes robot $j$, we note that out of the four terms, robot $i$ only has control over $L_g h^i(q_{ij})\omega_i$ to ensure that the addition of these four terms stays positive. We notice that $L_g h^i(q_{ij})u_i$ can be split as 
\begin{equation} \label{eq: lghi_split}
    L_g h^i(q_{ij})u_i = L_g h^i(q_{ij}) u_{\text{ref}}^i + L_g h^i(q_{ij}) u_{\text{safe}}^{ij},
\end{equation}
where the first term is given by the guiding vector field, and $u_\text{safe}^{ij} = - \Psi(q_{ij})/L_g h^i(q_{ij})\in\mathbb{R}$ is the action that robot $i$ has to take to avoid robot $j$, i.e., it will assist robot $i$ with making the inner expression of \eqref{eq: CBF_cond} positive by solving its QP problem \eqref{eq: qp_i} concerning the robot $j$. Now, we are ready to formulate the overtaking rule and see how it fits into the assistance of making positive the inner expression of \eqref{eq: CBF_cond}. The overtaking rule assumes that the robots orbit clockwise around $\mathcal{P}$.

\begin{defn} [Overtaking] \label{def: over}
  For a pair of non-collaborative robots $(i,j)$, the robot $i$ has the role of overtaking robot $j$. The set of robots that $i$ can overtake belongs to the set
    $\mathcal{N}_i := \{j \, :  L_gh^i(q_{ij}) > 0, \forall j\neq i\in\mathcal{N}\}.$ We say that robot $i$ is \emph{overtaking} robot $j$ if and only if $j\in\mathcal{N}_i$ and $\Psi(q_{ij}) < 0$.
\end{defn}

Since the overtaking rule requires $L_gh^i(q_{ij}) > 0$, it entails that $u_\text{safe}^{ij} > 0$ to assist with satisfying \eqref{eq: CBF_cond}, i.e., all the overtakes will be on the outside since the robots are following $\mathcal{P}$ with the same direction. Overtaking on the outside avoids robot $i$ the situation of an impossible decision of choosing between $u_\text{safe}^{ij} > 0$ and $u_\text{safe}^{ik} < 0$, with $k\neq j\in\mathcal{N}_i$. In fact, we will see that such an overtaking rule scales seamlessly for a robot swarm of unicycles, i.e., among all the \emph{supplements} $u_\text{safe}^{ij}$'s resulting from solving the $|\mathcal{N}_i|$ QP Problems (\ref{eq: qp_i}), robot $i$ only has to choose the largest supplement to turn and overtake all the robots in $\mathcal{N}_i$ safely. More importantly, we will see that such a rule is not harming the rest of the robots that need to overtake as well. Our tecnical results will require to split the overtaking into two sequential stages as in Figure \ref{fig: over}.

\begin{figure}[h!]
\centering

\begin{tikzpicture}[
    font=\scriptsize,
    shorten >=1pt, node distance = 8mm and 28mm, on grid, auto, 
    rct/.style = {draw, minimum height=7mm, minimum width=22mm},
    outer/.style={draw=gray,dashed,thick,inner sep=5pt},
    stage1/.pic={
        \node at (0,0.3) {Stage 1};
        \def\ytext{0.3}
        \fill[blue!100!white!20, opacity=0.8] (0,-\ytext) -- (-0.43,-\ytext) arc (180:0:0.43) -- cycle;
        \draw[thick,-{Latex[length=2mm]}] (0,0 -\ytext) -- (0.5,0 -\ytext) node[anchor=center] {};
        \draw[thick,-{Latex[length=2mm]}, blue] (0,0 -\ytext) -- (0.25,0.43 -\ytext) node[anchor=center] {};
        \node[blue] at (-0.08,0.33 -\ytext) {$\hat v_i$};
        \node at (0.4,0.15 -\ytext) {$\hat v_j$};
        },
    stage2/.pic={
        \node at (0,0.3) {Stage 2};
        \def\ytext{0.1}
        \fill[blue!100!white!20, opacity=0.8] (0,-\ytext) -- (-0.43,-\ytext) arc (-180:0:0.43) -- cycle;
        \draw[thick,-{Latex[length=2mm]}] (0,0 -\ytext) -- (0.5,0 -\ytext) node[anchor=center] {};
        \draw[thick,-{Latex[length=2mm]}, blue] (0,0 -\ytext) -- (0.25,-0.43 -\ytext) node[anchor=center] {};
        \node[blue] at (-0.08,-0.33 -\ytext) {$\hat v_i$};
        \node at (0.4,0.15 -\ytext) {$\hat v_j$};
        }
    ]
    
    \tikzstyle{every state}=[fill={rgb:black,1;white,10}]
    \node (q_1) [rct] {\textbf{Non-overtaking}};
    \node (q_2) [state, above right = of q_1, minimum size=1.3cm]{};
    \node (q_3) [state, right = of q_2, minimum size=1.3cm] {};

    \draw (q_2) pic {stage1};
    \draw (q_3) pic {stage2};
    
    \path[->]
        (q_1) edge [bend left]  node {O.R. \green{$\bullet$}}      (q_2)
        (q_2) edge [bend left]  node {$\hat v_j^T E \hat v_i < 0$} (q_3)
        (q_2) edge [bend left]  node {O.R. \red{$\bullet$}}        (q_1)
        (q_3) edge [bend left]  node {O.R. \red{$\bullet$}}        (q_1);

    \draw[dashed] (2,0) -- (2,2) -- (6.3,2) -- (6.3,0) -- (2,0);
    \node[text width=3cm] at (3.6,1.8) {\textbf{Overtaking}};
\end{tikzpicture}

\caption{O.R. stands for the overtaking rule in Definition \ref{def: over}. When the overtaking begins we have $\hat v_j^T E \hat v_i \geq 0$ as an initial condition.}
\label{fig: over}
\end{figure}
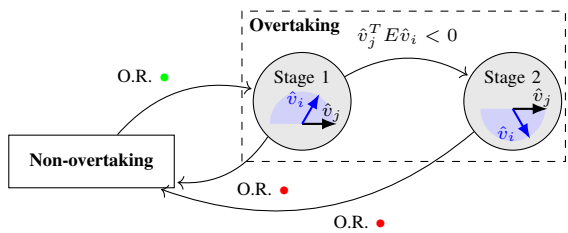 

It is worth noting that when the overtaking is not aggressive in a convex path, as the virtual collision radius $\rho$ and the guiding vector field (\ref{eq: gvf}) promote, then it is common to find $\hat p_{ij}(t)^T \, E \, \hat v_i(t) > 0$ since it entails that robot $i$ is facing at the outside regarding $p_{ij}$. Hence, the following mild assumption, check Figure \ref{fig: ass1}, is mostly necessary to discard \emph{problematic} initial conditions for the technical results. 

\begin{figure}
\centering
\includegraphics[trim={1cm 1cm 0cm 0cm}, clip, width=0.75\columnwidth]{./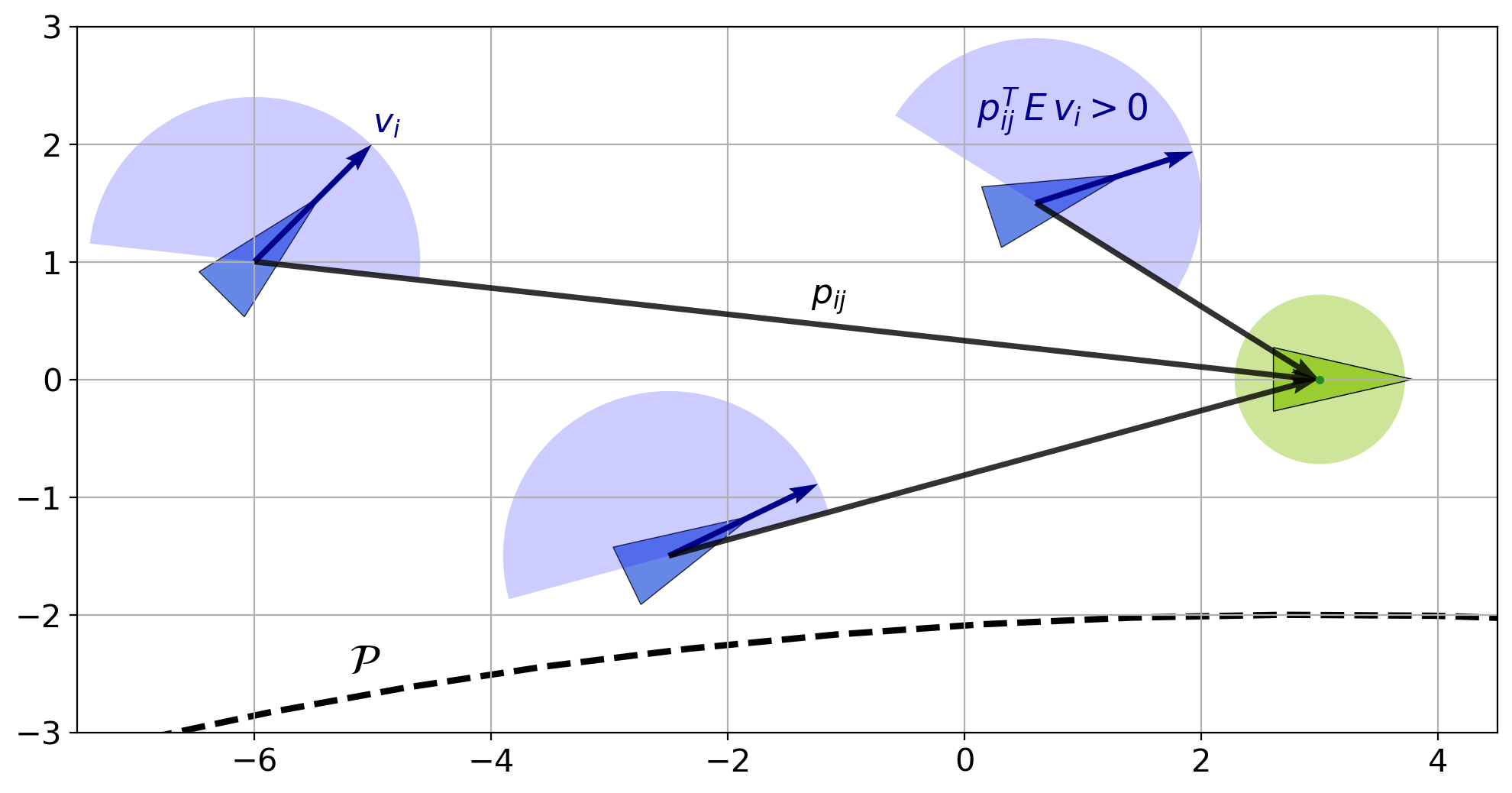}
\caption{Illustration of three robots satisfying Assumption \ref{asmp: pij_vi}. The blue semicircle indicates all the admissible directions for $v_i$ regarding $p_{ij}$.}
\label{fig: ass1}
\end{figure}
\begin{figure*}[t!]
    \centering
    \includegraphics[trim={0cm 0cm 0cm 0cm}, clip, width=1.6\columnwidth]{./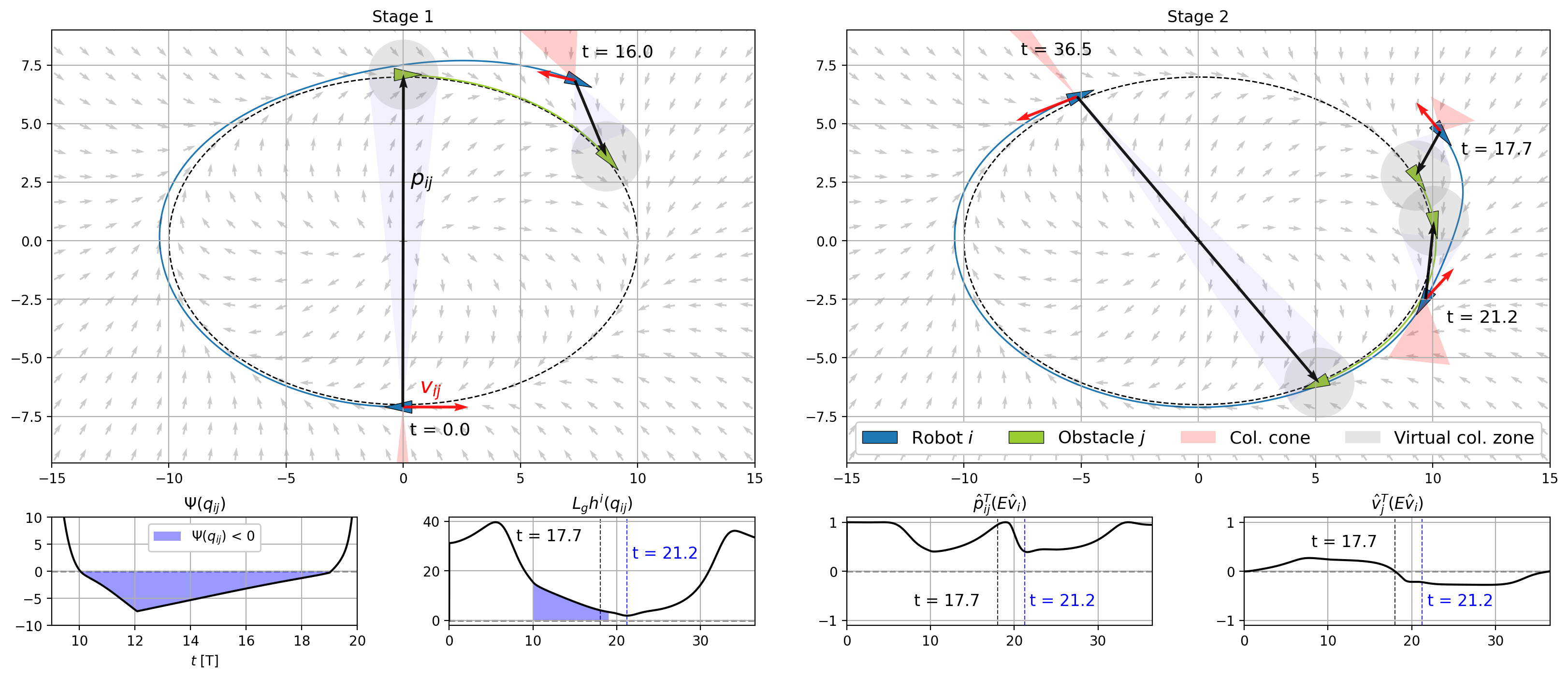}
    \caption{A pair of robots $(i,j)$ follows an elliptical path, where $s_i > s_j$ with the Definition \ref{def: over} as the overtaking rule. Since $\rho,\kappa$ are designed according to Lemma \ref{lemm: rho_kappa}, and this scenario fulfills the pre-overtaking conditions of Lemma \ref{lemm: over_man} and the Assumption \ref{asmp: pij_vi}, the simulation validates the prediction of having $L_gh^i(q_{ij})>0$ during the overtaking (blue colored area). Note that the stage changes at $t=17.7$ and $L_gh^i(q_{ij})$ reach their minimum value at $t=21.2$.}
\label{fig: stages}
\end{figure*}

\begin{assump}
\label{asmp: pij_vi}
    During the overtaking situation for the pair of robots $(i,j), i\in\mathcal{N}, j\in\mathcal{N}_i$, it holds that $\hat p_{ij}^T \, E \, \hat v_i > 0$.
\end{assump} 

\begin{lemma} 
\label{lemm: rho_kappa}
If \eqref{eq: rho_cond} holds, then there exists a $\kappa(h(q_{ij})) > - \dot h(x, u_{ref}(q_{ij}))$ such that $\Psi(q_{ij})>0$.
\end{lemma}
\begin{proof}
Substituting \eqref{eq: rho_cond} in \eqref{eq: c3bf} yields $h(q_{ij}) > 0$. Thus, since $\kappa(h(q_{ij}))$ is an (extended) class $\mathcal{K}$ function, there exists a $\kappa(h(q_{ij})) > - \dot h(x, u_{ref}(q_{ij}))$, and substituting such a $\kappa$ in \eqref{eq: psi} leads us to $\Psi(q_{ij})>0$.
\end{proof}
\begin{lemma} \label{lemm: over_man}
Let robots $i$ and $j$ with speeds $s_i \geq s_j$ following a convex path $\mathcal{P}$, where the Definition \ref{def: over} rules when there is an overtaking and Assumption \ref{asmp: pij_vi} fulfills. Consider an initial condition $q_{ij}(0)$ such that $e(p_j) = e(p_i) = 0$ and $\|\hat v_i - \hat\tau(p_i)\| = \|\hat v_j - \hat\tau(p_j)\| = 0$, with $e(p_{\{i,j\}})$ and $\hat\tau(p_{\{i,j\}})$ as in (\ref{eq: gvf}), and \eqref{eq: rho_cond} holds, then $L_g h^i(q_{ij})>0$ when $\Psi(q_{ij}) < 0$.
\end{lemma}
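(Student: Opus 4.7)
The plan is to analyze the factor $(\hat p_{ij} + \hat v_{ij}\cos\phi_{ij})^T E v_i$ appearing in \eqref{eq: lghi} directly and show it is strictly positive under the hypotheses. Since $\|p_{ij}\| > 0$, I would split this into $T_1 := \hat p_{ij}^T E v_i$ and $T_2 := \cos\phi_{ij}\,\hat v_{ij}^T E v_i$ and treat the two terms separately. Pulling out the constant speed in $T_1$ gives $T_1 = s_i\, \hat p_{ij}^T E \hat v_i$, which Assumption~\ref{asmp: pij_vi} renders strictly positive; in any compact overtaking window this provides a uniform lower bound $T_1 \geq s_i \alpha > 0$ for some $\alpha > 0$.

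The heart of the proof is to bound $|T_2|$ using the smallness of $\epsilon$ and $\delta$. The hypothesis on $\delta$ places both velocities near their respective tangents to $\mathcal{P}$, and the hypothesis on $\epsilon$, together with the convexity and $C^2$ regularity of $\varphi$ in a neighborhood of $\mathcal{P}$ (which makes $p \mapsto \hat\tau(p)$ locally Lipschitz), forces $\|\hat\tau(p_i) - \hat\tau(p_j)\|$ to be small as well. A triangle inequality then yields $\|\hat v_i - \hat v_j\| = O(\epsilon + \delta)$. Writing $v_{ij} = -s_i(\hat v_i - \hat v_j) + (s_j - s_i)\hat v_j$ shows $v_{ij}$ is nearly parallel to $\hat v_i$ (antiparallel if $s_i > s_j$), so $\hat v_{ij}^T E \hat v_i$ is near zero because $E \hat v_i \perp \hat v_i$. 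Since $\cos\phi_{ij} \in [0,1]$, this gives $|T_2| = O(\epsilon + \delta)$, and choosing $\epsilon, \delta$ small enough against the lower bound on $T_1$ closes the argument, yielding $\mathcal{L}_g h^i(q_{ij}) > 0$ whenever $\Psi(q_{ij}) < 0$.

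The main obstacle I anticipate is the geometric step linking the small level-set gap $\epsilon$ to closeness of the path tangents at $p_i$ and $p_j$. The bound $|e(p_i) - e(p_j)| < \epsilon$ constrains only the \emph{normal} displacement of $p_i$ and $p_j$ with respect to $\mathcal{P}$, so without a further ceiling on $\|p_{ij}\|$ the two tangent vectors could still differ significantly along a long convex arc. I would therefore need to exploit that an active C3BF constraint (i.e., $\Psi < 0$) forces $\|p_{ij}\|$ to stay moderate in terms of $\rho$, and combine this with the convex path's bounded curvature to conclude that $\hat\tau(p_i)$ and $\hat\tau(p_j)$ remain close. This geometric step is the most delicate part of the argument and may require a mild additional structural assumption comparing the minimum radius of curvature of $\mathcal{P}$ with the virtual radius $\rho$.
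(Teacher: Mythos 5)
Your decomposition of $\mathcal{L}_g h^i(q_{ij})$ into $T_1 = \hat p_{ij}^T E v_i$ and $T_2 = \cos\phi_{ij}\,\hat v_{ij}^T E v_i$ matches the paper's starting point (note that $\hat v_{ij}^T E v_i \propto \hat v_j^T E \hat v_i$ since $v_i^T E v_i = 0$, which is the form the paper actually works with). The genuine gap is temporal: the hypotheses $|e(p_i)-e(p_j)|<\epsilon$ and $\|\hat v_i - \hat\tau(p_i)\| + \|\hat v_j - \hat\tau(p_j)\| < \delta$ are imposed only \emph{at the instant} $\Psi(q_{ij})$ crosses to negative values, i.e., they are pre-overtaking conditions, yet the conclusion $\mathcal{L}_g h^i(q_{ij})>0$ must hold for the whole duration of $\Psi<0$. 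Your bound $\|\hat v_i - \hat v_j\| = O(\epsilon+\delta)$, and hence $|T_2| = O(\epsilon+\delta)$, is therefore only available at the onset. Once the overtake is under way, robot $i$ deliberately turns away from the vector field to pass on the outside, $\hat v_i$ departs from $\hat\tau(p_i)$, and the paper's Stage~2 is precisely the regime where $\hat v_j^T E \hat v_i < 0$, i.e., $T_2$ is genuinely negative and not small; a static perturbation argument cannot rule out $T_1 + T_2 \leq 0$ there.

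The paper closes this with a dynamic, closed-loop argument that your proposal is missing: it computes $\frac{\mathrm{d}}{\mathrm{dt}}(\hat v_j^T E \hat v_i) = \hat v_j^T \hat v_i\,(u_j - u_i)$ and observes that, because $u_{\text{safe}}^{ij} = -\Psi(q_{ij})/\mathcal{L}_g h^i(q_{ij})$ diverges as $\mathcal{L}_g h^i(q_{ij}) \rightarrow 0^+$ while $\Psi(q_{ij})<0$, the control $u_i$ grows large enough to force $u_j - u_i > 0$ before $\mathcal{L}_g h^i(q_{ij})$ can reach zero, which arrests the decrease of $\hat v_j^T E \hat v_i$. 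In other words, positivity of $\mathcal{L}_g h^i(q_{ij})$ in Stage~2 is maintained by the feedback singularity of the QP solution, not by smallness of the initial data; the $\epsilon,\delta$ hypotheses are used only (together with a continuity argument from the nominal case $\epsilon=\delta=0$ on a circle with antipodal initial conditions) to establish that $\hat v_j^T E \hat v_i > 0$ at the moment the constraint activates, i.e., during Stage~1. Your closing worry about relating the level-set gap to tangent closeness is legitimate but secondary; the essential missing ingredient is the Stage-2 feedback mechanism.
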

\begin{proof}
    Note that $L_g h^i(q_{ij})$ in \eqref{eq: lghi} consists of the sum of two terms, and the first term $\hat p_{ij}^T \, E \, \hat v_i > 0$ because of Assumption \ref{asmp: pij_vi}. Regarding the second term, we will split our analysis for the two stages of the overtaking. For the Stage1, we have that $\hat v_j^T \, E \, \hat v_i > 0$ by definition; thus, to ensure that $L_g h^i(q_{ij})>0$ during Stage1, we only need to show that its second term is positive when $\Psi(q_{ij})$ changes to negative values with $e(p_j) = e(p_i) = 0$ and $\|\hat v_i - \hat\tau(p_i)\| = \|\hat v_j - \hat\tau(p_j)\| = 0$, i.e., both robots are on and following the path. 

First, without loss of generality, let us consider a circular path $\mathcal{P}$ and the initial conditions $\hat v_i(0) = - \hat v_j(0)$, as in Figure \ref{fig: stages}, i.e., the pair of robots starts from opposite sides of the circular path. Hence, $\hat p_{ij}(0)^\top \hat v_{ij}(0) = 0 > -1$ so that there are $\kappa$ and $\rho$ such that $\Psi(q_{ij}(0)) > 0$ according to Lemma \ref{lemm: rho_kappa}. Next, considering the previous initial condition, we analyze the behavior of $\hat v_j^T \, E \, \hat v_i$ until $\Psi(q_{ij}) < 0$. Given both robots traveling over $\mathcal{P}$ clockwise, when $s_i = s_j$ they never approach each other, i.e., $\Psi(q_{ij})$ is always positive, so the claim is trivial. On the other hand, when $s_i > s_j$ we have $|u_{ref}^i| > |u_{ref}^j|$ and $u_{ref}^{\{i,j\}}<0$. Therefore, as $\hat v_j(0)^T \, E \, \hat v_i(0) = 0$ and $u_{\{i,j\}} = u_{ref}^{\{i,j\}}$, we have that
    \begin{align} \label{eq: dot_vjevi}
        \frac{\mathrm{d}}{\mathrm{dt}}(\hat v_j^T \, E \, \hat v_i) &= \hat v_j^T\hat v_i (u_j - u_i)
    \end{align}
    remains positive until $E\hat v_i = \hat v_j$. Although at this point $\hat v_j^T \, E \, \hat v_i$ begins to decrease, it can not reach zero until $\hat v_i = \hat v_j$. However, since the path is convex and both robots are on it, the only situation where $\hat v_i = \hat v_j$ is when $p_i = p_j$, 
    indicating a collision. Hence, as no collision occurs before $\Psi(q_{ij})<0$, then $\hat v_j^T \, E \, \hat v_i > 0$ until $\Psi(q_{ij})<0$. Then, we conclude that when $\Psi(q_{ij})$ switches its sign to negative, $\hat v_j^T \, E \, \hat v_i > 0$; thus $L_g h^i (q_{ij}) > 0$ during Stage1. Of course, if $\Psi \geq 0$ triggers during Stage1, the overtaking is over.
    
If $\Psi(q_{ij}) < 0$, the overtaking continues and $\hat v_j^T \, E \, \hat v_i$ switches to negative sign eventually, i.e., the overtaking transitions to Stage2. From this point onward, $L_g h^i(q_{ij})$ may get close to zero eventually. Nonetheless, since $\Psi(q_{ij})<0$, as $L_gh^i(q_{ij}) \rightarrow 0^+$ the value of $u_{safe}^{ij}$ increases the inner expression of (\ref{eq: CBF_cond}). Hence, as the time derivative of $\hat v_j^T \, E \, \hat v_i$ is given by \eqref{eq: dot_vjevi}, before $L_gh^i(q_{ij}) = 0$ can happen, the control law $u_i$ reaches a sufficiently high value to hold $u_j - u_i > 0$, which prevents $\hat v_j^T\, E \, \hat v_i$ decreasing further. Therefore, $u_{safe}^{ij}$ ensures $L_gh^i(q_{ij})>0$ during Stage2.
\end{proof}

\begin{remark}
The conservative initial conditions of Lemma \ref{lemm: over_man} ensures that $L_gh^i(q_{ij})$ is \textit{strictly} greater than zero. Thus, we can confidently extend this result by continuity arguments to overtakings where $|e(p_j)| + |e(p_i)| <\epsilon$ and $\|\hat v_i - \hat\tau(p_i)\| + \|\hat v_j - \hat\tau(p_j)\| < \delta$, with $\epsilon, \delta\in\mathbb{R}^+$ being sufficiently small.
\label{rem: cont}
\end{remark}

\begin{remark}[On the design of $\rho$ and $\kappa$]
Since $L_gh^i(q_{ij})u_i$ is split as in \eqref{eq: lghi_split}, maintaining small negative values of $\Psi(q_{ij})$ and a safe distance from the singularity of (\ref{eq: u_safe}) at $L_gh^i = 0$ is an effective strategy to keep $u_i(t) \in \mathcal{U}, \forall t$. Firstly, since $|\Psi(q_{ij})| = |\dot h(q_{ij}, u_{ref}^i) + \kappa(h(q_{ij}))|$, if $h(q_{ij}) > 0$ when $\Psi(q_{ij})<0$ we can design $\kappa$ accordingly to bound the minimum negative value of $\Psi(q_{ij})$, e.g, 
	$\kappa = \gamma h^3$, where $\gamma \in \mathbb{R}^+$ serves as a constant to modulate the safety condition's tolerance with respect to the rate of decrease in $h(q_{ij})$. However, it is essential to exercise caution when selecting high values of $\gamma$, as they increase the overtaking aggressiveness. Hence, it is necessary to design $\rho$ accordingly, e.g., $\rho(\|p_{ij}\|) = \frac{\|p_{ij}\|^d}{r^{d - 1}}$, where $d \in[0,1]$ represents a constant that modulates the rate of increase in the derivative of $\rho$ as the robot moves away from the obstacle. A higher value of $d$ implies a smooth overtake, but it also compromises \eqref{eq: rho_cond}, e.g., $d=1$ yields $\rho(\|p_{ij}\|) = \|p_{ij}\|$ and makes it impossible to satisfy \eqref{eq: rho_cond} when $\hat p_{ij}^\top \hat v_{ij} \leq 0$. Therefore, since $\rho$ should be designed so that Lemma \ref{lemm: rho_kappa} applies, given the set of expected initial conditions $q_{ij}(0)$, check Figure \ref{fig: cbf_cone}, one should choose $d$ that fulfills \eqref{eq: rho_cond} for such initial conditions. Another interesting alternative for $\rho$ is the sigmoid function.
\label{rem: rhokappa}
\end{remark}

Now, we are ready to show that the CBF (\ref{eq: c3bf}) is a valid one for a pair of robots $(i,j)$ given the Definition \ref{def: over} as overtaking, and the technical result in Lemma \ref{lemm: over_man}.

\begin{prop} \label{pro: cbf}
 Consider a pair of robots $(i,j)$ with the relative dynamics \eqref{eq: rel_kinematics}, the Definition \ref{def: over} as overtaking rule and the pre-overtaking conditions of Lemma \ref{lemm: over_man}; then, according to the Definition \ref{def: validcbf}, the CBF \eqref{eq: c3bf} is valid in $\mathcal{C}_{ij}$.
\end{prop}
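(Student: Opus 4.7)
The plan is to verify the two requirements of Definition \ref{def: validcbf} for $h$ in \eqref{eq: c3bf}: (i) non-vanishing gradient of $h$ on $\partial\mathcal{C}_{ij}$, and (ii) existence of a class $\mathcal{K}$ function $\kappa$ making the safety condition \eqref{eq: CBF_cond} hold over the set of overtaking states.

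I would dispatch (i) quickly. Since $\rho(\|p_{ij}\|) < \|p_{ij}\|$ by hypothesis, the square root appearing in $\cos\phi_{ij}$ is strictly positive, so $h$ is continuously differentiable on $\mathcal{C}_{ij}$. On $\partial\mathcal{C}_{ij}$, $h=0$ means $v_{ij}$ sits exactly on the boundary of the collision cone; perturbations of $v_{ij}$ across the boundary cone change $h$ linearly, so $\partial h/\partial q_{ij}$ cannot vanish there.

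For (ii), I would split the analysis through Definition \ref{def: over}. If $\Psi(q_{ij})\geq 0$, then robot $i$ is not overtaking and the choice $u_i=u_{ref}^i$ already makes the inner expression of \eqref{eq: CBF_cond} non-negative, so the supremum is trivially $\geq 0$. If $\Psi(q_{ij})<0$, i.e., robot $i$ is overtaking robot $j$, then the pre-overtaking conditions of Lemma \ref{lemm: over_man} apply and give $L_gh^i(q_{ij})>0$. This is the key hook: with a strictly positive $L_gh^i$ the closed-form QP supplement \eqref{eq: u_safe} is free of the singularity mentioned after \eqref{eq: u_safe}, and plugging $u_i = u_{ref}^i + u_{safe}^{ij}$ into the left-hand side of \eqref{eq: CBF_cond} makes it equal to zero by construction, certifying that $\sup_{u\in\mathcal{U}}[\dot h(q_{ij})+\kappa(h(q_{ij}))]\geq 0$.

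The main obstacle I anticipate is ensuring the supplement $u_{safe}^{ij}$ actually lies in the bounded set $\mathcal{U}$, not just in $\mathbb{R}$. This requires simultaneously bounding $|\Psi(q_{ij})|$ away from large values and keeping $L_gh^i(q_{ij})$ from approaching zero. Lemma \ref{lemm: over_man} only guarantees strict positivity of $L_gh^i$ pointwise, so I would close this gap by invoking the design guidelines of Remark \ref{rem: rhokappa}: a $\rho$ with slow growth (e.g.\ $\|p_{ij}\|^d/r^{d-1}$, $d\in[0,1)$) triggers overtakings early and smoothly, while a $\kappa$ like $\gamma h^3$ keeps negative excursions of $\Psi$ small as long as $h$ remains positive. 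Combined with the $L_gh^i>0$ guarantee from Lemma \ref{lemm: over_man}, these choices keep \eqref{eq: u_safe} well-conditioned and feasible within $\mathcal{U}$, completing the validity argument for \eqref{eq: c3bf} on $\mathcal{C}_{ij}$.
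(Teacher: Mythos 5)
Your proof is correct and follows essentially the same route as the paper: both reduce validity to the positivity of $L_g h^i(q_{ij})$ guaranteed by Lemma \ref{lemm: over_man}, which makes the explicit QP supplement \eqref{eq: u_safe} well-defined and drives the inner expression of \eqref{eq: CBF_cond} to zero (the $\Psi\geq 0$ case being trivial). You are in fact slightly more thorough than the paper, which omits the gradient non-degeneracy check and defers the question of whether $u_{\text{safe}}^{ij}$ actually lies in the bounded set $\mathcal{U}$ to Remark \ref{rem: rhokappa} and the feasibility item of Theorem \ref{thm: multi_cbf} rather than addressing it inside the proposition.
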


\begin{proof}
Firstly, let us suppose that $q_{ij} \in \mathcal{C}_{ij}$, so we can guarantee  $q_{ij} \in \mathcal{X}_{ij}$ conservatively. Hence, as $r < \|p_{ij}\|$, then \eqref{eq: c3bf} is well-defined. Secondly, robots do not collaborate because of the Definition \ref{def: over} on overtaking; therefore, $u_i$ is the only control law that can be modified to make the inner expression of \eqref{eq: CBF_cond} positive. Without loss of generality, let us take $u_i = u_{\text{ref}}^i + u_{\text{safe}}^{ij}$ and split $L_g h^i(q_{ij}) u_i$ as in \eqref{eq: lghi_split}. Hence, as Lemma \ref{lemm: over_man} guarantees that $L_g h^i(q_{ij}) > 0$, then it is always possible to find one $u_{\text{safe}}^{ij}$ that holds \eqref{eq: CBF_cond}. Thus, the CBF \eqref{eq: c3bf} is valid in $\mathcal{C}_{ij}$.
\end{proof}

\begin{figure*}[h!]
\centering
\includegraphics[trim={2.4cm 0.2cm 1.8cm 0.8cm}, clip, width=1.26\columnwidth]{./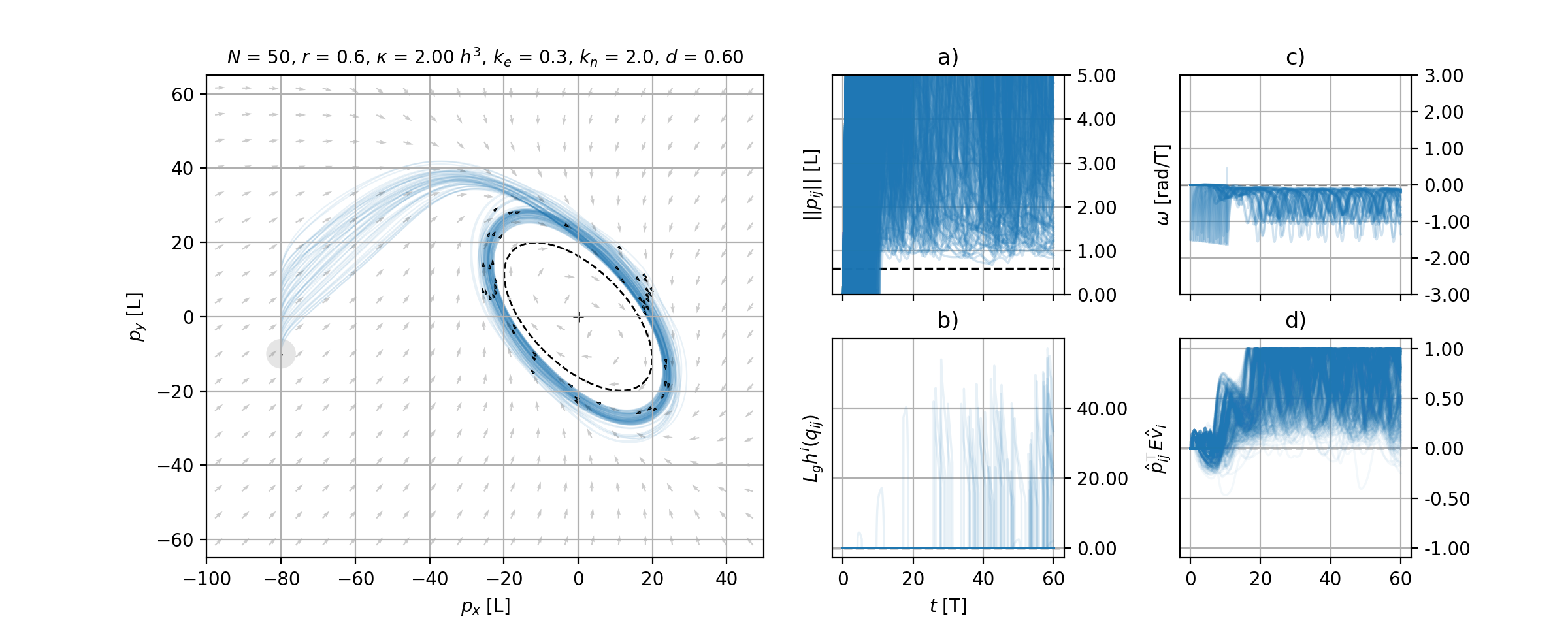}
\caption{A \emph{mother ship} travels vertically up at $X = -80$ and launches during $10$ T units a swarm of $50$ unicycles to orbit around an ellipse with different speeds of around $5$ L units per T unit. A robot is launched once the previous one is at $2.5$ times the safety distance $r$. The arrows show the direction of the GVF (\ref{eq: gvf}). Each robot $i$ is provided with the $\omega_j$ of all $j\in\mathcal{N}_i$, and $\mathcal{N}_i$ is generated by design before the simulation starts. Clusters of robots form for some periods when the speed of the robots are similar. On the right side: \textbf{a)} the distances between all the robots and the safety distance (black dashed line), during $t\in[0,10]$ many robots are together in the mother ship explaining the $0$ distance in the plot; \textbf{b)} $L_gh^i$ when $\Psi <0$, which remains positive; \textbf{c)} the proper design of $\kappa$ and $\rho$ as in Remark \ref{rem: rhokappa} promote smooth overtaking with non-aggressive $\omega$'s; \textbf{d)} once the last robot is launched, the cases when $\hat p_{ij}^\top E \hat v_{ij}\leq0$ are isolated and do not compromise the overtaking, i.e., the Assumption \ref{asmp: pij_vi} is conservative for this simulation.}
\label{fig: sim2}
\end{figure*}

\begin{figure*}[h!]
\centering
\includegraphics[trim={2cm 0.2cm 1.5cm 0.8cm}, clip, width=1.26\columnwidth]{./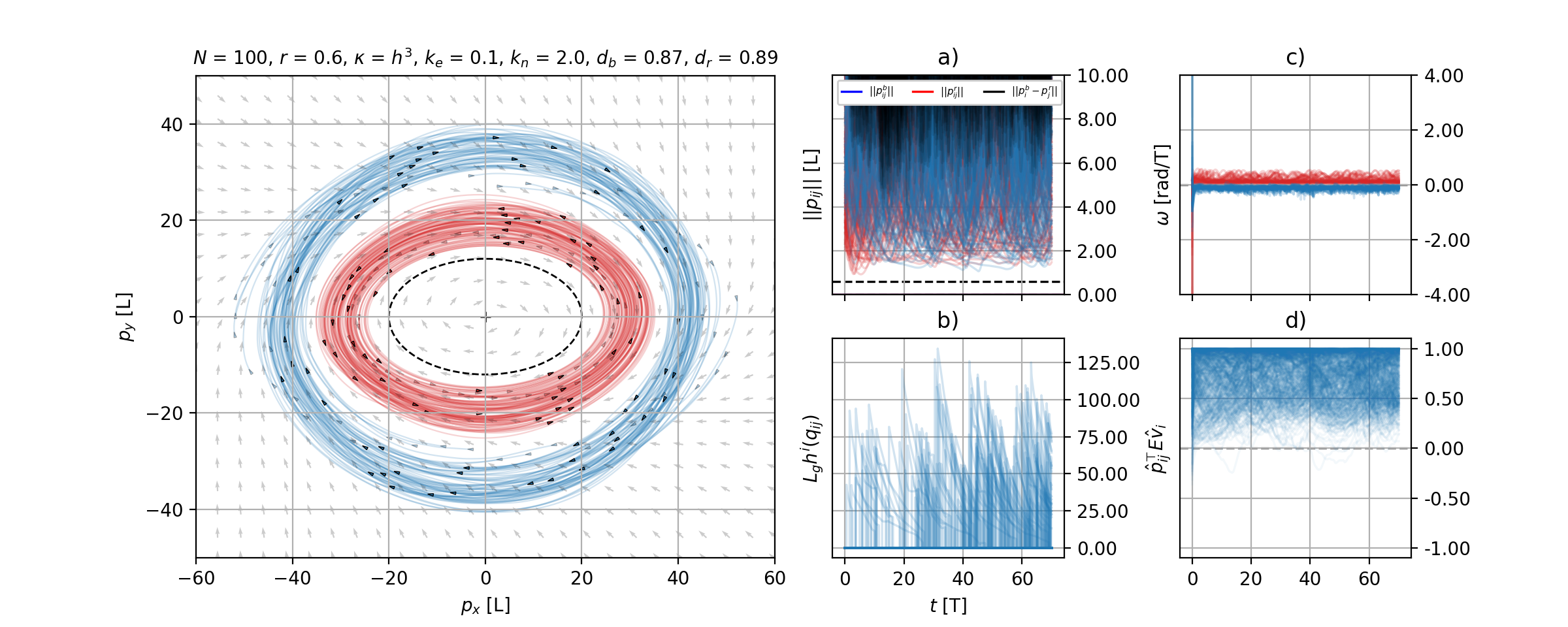}
\caption{Two robot swarms of $50$ robots each orbiting in opposite directions an elliptical path with diverse speeds between $3$ and $6$ L units per T unit. The arrows show the direction of the GVF (\ref{eq: gvf}) for the blue swarm, which turns in the opposite direction of the GVF for the red swarm. The red swarm starts deployed around a lower level set than the blue swarm. The red swarm ignores the blue one, while the blues \emph{overtake} the reds. The blue robots consider red ones as fixed obstacles for triggering the overtaking rule, i.e., $v_j = \omega_j = 0$; thus, Assumption \ref{asmp: pij_vi} is conservative and not necessary. An emergent behavior occurs with a clear boundary between the two swarms. On the right: \textbf{a)} the distances between all the robots and the safety distance (black dashed line); \textbf{b)} $L_gh^i$ when $\Psi <0$, which remains positive; \textbf{c)} the design of $\kappa$ and $\rho$ as in Remark \ref{rem: rhokappa} not only promote smooth overtaking with non-aggressive $\omega$'s but the blue swarm can avoid the red one, the spike of $\omega$ at the first instants is due to the alignment of the robots with the guiding vector field (\ref{eq: gvf}); \textbf{d)} the cases when $\hat p_{ij}^\top E \hat v_{ij}\leq0$ are isolated and do not compromise the overtaking, i.e., the Assumption \ref{asmp: pij_vi} is conservative for this simulation.}
\label{fig: sim3}
\end{figure*}

Once we have confirmed that \eqref{eq: c3bf} is a valid CBF for non-collaborative unicycle robot pairs following a convex path $\mathcal{P}$, we can scale up to a robot swarm with the collision avoidance among its individuals using the following control law
\begin{equation} \label{eq: u_safe_coop}
    u_{\text{safe}}^{i} = \max (u_{\text{safe}}^{ij} : j \in \mathcal{N}_i),
\end{equation}
where all the $u_{safe}^{ij}$'s can be computed with the C3BF-QP explicit solution given by \eqref{eq: u_safe}. The idea of this new controller is that each robot just focuses on its set $\mathcal{N}_i$, i.e., it drastically reduces the network connectivity requirements of the swarm. 
\begin{remark} \label{rem: N_i_omega_j}
    Although in \eqref{eq: u_safe_coop} $i$ requests $w_j$ for all $j \in \mathcal{N}_i$, robot $i$ can consider a worst-case scenario $\Omega_j = \operatorname{sup}\{\mathcal{U}\}$, i.e., $\Omega_j \geq \omega_j$. Hence, since $p_{ij}$ and $v_{ij}$ can be measured by $i$, then \eqref{eq: u_safe_coop} can be computed using only local information.
\end{remark}
\begin{theorem} [Circular formation control for unicycle-robot swarms]\label{thm: multi_cbf}
Given a set of unicycle robots $\mathcal{N}$ with relative dynamics \eqref{eq: rel_kinematics}, a $q_{ij}(0) \in \mathcal{C}_{ij}$ that fits the conditions of Lemma \ref{lemm: over_man}, and the Definition \ref{def: over} as the overtaking rule. 
If $\kappa$ in (\ref{eq: psi}) and $\rho$ in (\ref{eq: c3bf}) are designed according to Lemma \ref{lemm: rho_kappa}, $u_{safe}^{i}$ is given by \eqref{eq: u_safe_coop} $\forall i\in\mathcal{N}$, $u_{ref}^{i}$ aligns with (\ref{eq: gvf}) to follow a common convex path $\mathcal{P}$ in the same direction, then the control law $u_i  = u_{\text{ref}}^i + u_{\text{safe}}^i$ solves Problem \ref{problem}.
\end{theorem}
\begin{proof}
    Consider all robots $i\in\mathcal{N}$, we first note that the control law $u_i  = u_{\text{ref}}^i + u_{\text{safe}}^i$ solves the C3BF-QP problems \eqref{eq: CBF-QP} $\forall j\in\mathcal{N}_i$, and following Proposition \ref{pro: cbf} we have that $\mathcal{C}_{ij}$ is invariant if $q_{ij}(0) \in \mathcal{C}_{ij}$. Now, we check on the three objectives of Problem \ref{problem}. {\bf 1)} Since $\mathcal{C}_{ij}$ is invariant, then $q_{ij}(t) \in \mathcal{X}_{ij}, \forall t>0$. {\bf 2)} According to the overtaking rule in Definition \ref{def: over} and Lemma \ref{lemm: over_man}, the robot with the slowest $s_i$ will not overtake any robot, i.e., the slowest robot (or robots with equal slowest speeds) will converge asymptotically to $\mathcal{P}$. In parallel, from Lemma \ref{lemm: over_man} we can conclude that $\|p_{ij}\|$ is bounded during the overtaking. Consequently, for every robot $i\in\mathcal{N}$ there exist an $\epsilon > 0$ such that $|e_i(t)| < \epsilon, \forall t > T$, for some $T \in \mathbb{R}^+$. {\bf 3)} See Remark \ref{rem: rhokappa} and Remark \ref{rem: N_i_omega_j}.
\end{proof}

%
%


\section{Simulations}
\label{sec: sim}
We validate numerically our results by conducting two simulations. In Figure \ref{fig: sim2}, we explain how a robot swarm is \emph{launched} from a \emph{mother ship} to enclose a common elliptical path. In Figure \ref{fig: sim3}, we explain how two robot swarms orbit the same elliptical path with opposite directions. Note that the initial conditions of both simulations are not as conservative as Lemma \ref{lemm: over_man} requires, which agrees with Remark \ref{rem: cont}.



\section{Conclusion}
\label{sec: con}
We have presented a behavioral-based circular formation control for a robot swarm of unicycles with constant but different speeds. We have shown that the combination of a guiding vector field with a C3BF can guarantee that the \emph{overtaking on the outside} is scalable and safe under a mild assumption that simulations show it is not necessary.

\bibliographystyle{IEEEtran}
\bibliography{biblio}

\end{document}